  \providecommand\BibTeX{{%
    \normalfont B\kern-0.5em{\scshape i\kern-0.25em b}\kern-0.8em\TeX}}}
\newcommand{\paragraStartHighlight}[1]{\noindent\textbf{#1}}
\newcommand{\adagnn}{AdaGNN}
\newtheorem{proposition}{Proposition}
\DeclareMathOperator*{\argmin}{arg\,min}
\DeclareMathOperator{\enc}{\text{\textbf{ENC}}}
\DeclareMathOperator{\dec}{\text{\textbf{DEC}}}
\DeclareMathOperator{\agg}{\text{\textbf{AGGREGATE}}}
\DeclareMathOperator{\combine}{\text{\textbf{COMBINE}}}
\begin{document}

\title{AdaGNN: A multi-modal latent representation meta-learner for GNNs based on AdaBoosting}

\author{Qinyi Zhu}
\email{qinzhu@berkeley.edu}
\affiliation{%
  \institution{University of California, Berkeley}
  \streetaddress{200 California Hall}
  \city{Berkeley}
  \state{California}
  \country{USA}
  \postcode{94720}
}

\author{Yiou Xiao}
\email{yixiao@linkedin.com}
\affiliation{%
  \institution{LinkedIn}
  \streetaddress{1000 W Maude Ave}
  \city{Sunnyvale}
  \state{California}
  \country{USA}
  \postcode{94085}
}

\renewcommand{\shortauthors}{Zhu and Xiao, et al.}

\begin{abstract}
As a special field in deep learning, Graph Neural Networks (GNNs) focus on extracting intrinsic network features and have drawn unprecedented popularity in both academia and industry. Most of the state-of-the-art GNN models offer expressive, robust, scalable and inductive solutions empowering social network recommender systems with rich network features that are computationally difficult to leverage with graph traversal based methods.

Most recent GNNs follow an encoder-decoder paradigm to encode high dimensional heterogeneous information from a subgraph\footnote{node, edges, subgraphs or entire graph; we focus on node embeddings in this work} onto one low dimensional embedding space. However, one single embedding space usually fails to capture all aspects of graph signals. In this work, we propose boosting-based meta learner for GNNs, which automatically learns multiple projections and the corresponding embedding spaces that captures different aspects of the graph signals. As a result, similarities between sub-graphs are quantified by embedding proximity on multiple embedding spaces. \adagnn\, performs exceptionally well for applications with rich and diverse node neighborhood information. Moreover, \adagnn\, is compatible with any inductive GNNs for both node-level and edge-level tasks.
\end{abstract}

\keywords{graph neural network, multiple embedding spaces, boosting, multi modal}

\maketitle

\section{Introduction} \label{introduction}

Graph representation learning has advanced greatly in the recent years and has drawn attention in both academia and industry because GNNs are expressive, flexible, robust and scalable. 

Many recent GNN models learn the projections from node neighborhoods using different sampling, aggregation and transformations~\cite{grle16, peas14, Kipf:2016tc, hamilton2017inductive, xu2018how,DBLP:conf/icml/YouYL19, rossi20}. GNNs have been adopted in various academic and industrial settings, such as link prediction~\cite{ying2018graph}, protein-protein interaction~\cite{shen2021npi}, community detection \cite{wupc19}, and recommender systems~\cite{wahz18,yihc18}. Furthermore, \citet{aligraph,yihc18,lews19,maym18} develop fault-tolerant and distributed systems to apply graph neural networks (GNNs) to large graphs.

These GNN models focus on learning a single encoder projecting graph substructures to a representation embedding. However, low-dimensional node embeddings sometimes fail to capture all the high-dimensional information about node neighborhoods. For example, in a social network, users may connect to distinct neighbors who share different common interests and thus the semantic meaning of edges varies~\cite{yang19}. Most prior works aim to capture the diverse signals of node neighborhoods by increasing the embedding dimension. However, it is challenging to encode the local multi-modal information to one embedding space in many cases. Multi-head attention bridges the gap to some extent by learning different attention heads for different neighbors~\cite{velickovic18}. One promising alternative is to learn multiple representations where each embedding captures a specific aspect of the rich information about node neighborhoods. By projecting to multiple low-dimensional node embedding spaces, we find it extremely promising when the inter-node affinity correlates with the inter-embedding similarity in one or more sub-spaces instead of the entire space.

Network machine learning applications usually consider exceptionally heterogeneous feature from node, topological signals, various neighborhoods and communities where there exists an ensemble of latent semantics under the network features~\cite{yang19}.  Here we propose boosting-based GNNs, which automatically learn projections to multiple low-dimensional embedding spaces from the high-dimensional graph contents and determine the focus of each embedding space according to the node neighborhoods. 

\setlength\belowcaptionskip{-3ex}

\begin{figure*}
	\begin{subfigure}[t]{0.48\textwidth}
		\centering
		\includegraphics[scale=0.27]{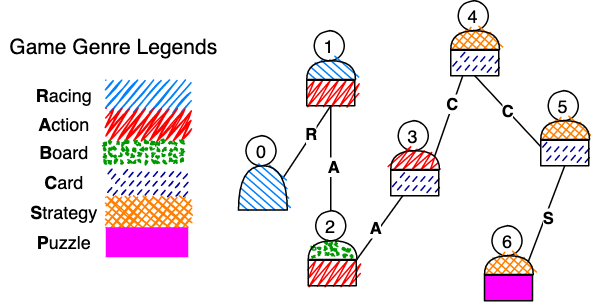}
		\caption{Most members have 2 different types of interests, and connections are annotated with reason for connection (common interests of game genres)}
		\label{fig:toy_example1}
	\end{subfigure}
	\begin{subfigure}[t]{0.48\textwidth}
		\centering
		\includegraphics[scale=0.24]{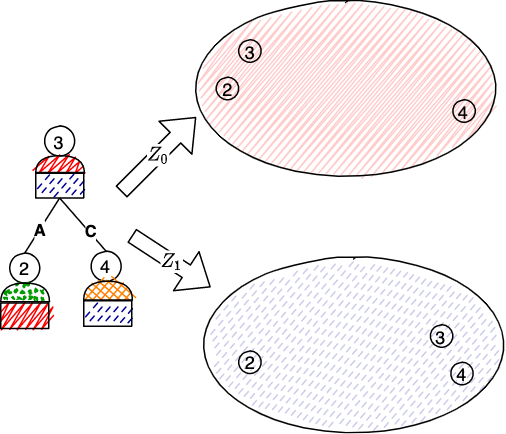}
		\caption{Multi-modal latent representation spaces}
		\label{fig:toy_example2}
	\end{subfigure}
	\caption{A toy example of a Twitch social network of 7 users and 6 links.}
	\label{fig:toy_example}
\end{figure*}

\subsection{Preliminaries}
\paragraStartHighlight{Static Graph} Despite that the concrete formulation of graph problems varies, we use a quadruplet 
\begin{equation} \label{eq:graph}
\mathcal{G} = (\mathcal{V}, \mathcal{E}, \{\mathbf{v}_i\in \mathbb{R}^{d_\mathcal{V}}\}, \{\mathbf{e}_{i,j}\in \mathbb{R}^{d_\mathcal{E}}\})
\end{equation}
to denote the holistic information about a graph, where $\mathcal{V}$ and $\mathcal{E}\subseteq\mathcal{V}\times\mathcal{V}$ are the sets of all nodes and edges respectively. They are endowed by node features $\mathbf{v}$ and edge features $\mathbf{e}$. 

\paragraStartHighlight{Dynamic Graph} Recent works investigated deeply into the representation learning in dynamic graphs and corresponding variations of GNNs~\cite{rossi20, kumar2019predicting} which consider both the evolution of $\mathcal{V}(t), \mathcal{E}(t)$ in terms of cardinality and feature updates $\mathbf{v}_i$ and $\mathbf{e}_{i,j}$. For simplicity, we assume $\mathcal{V}=\{v_1, v_2,\dots, v_n\}$ is the universe of all nodes that ever exist during the trajectory of the dynamic network; and features are timestamped $\mathbf{v}_i(t),\mathbf{e}_{i,j}(t)$.

\paragraStartHighlight{Embedding Space} An embedding space $\mathcal{Z}\subseteq\mathbb{R}^{d_\mathcal{Z}}$ is a vector space onto which we project nodes $\mathcal{V}$. Let $s_{i,j}=S_\mathcal{Z}(\mathbf{z}_i, \mathbf{z}_j)$ denote a similarity measure between two embeddings.

\paragraStartHighlight{Encoder-Decoder framework}  We let $\mathcal{G}_i$ denote the neighborhood\footnote{A $L$-hop neighborhood$\mathcal{N}_i^L$ considering all the nodes and edges within a radius of $L$-edges w.r.t $i$ is a widely used definition.}around node $v_i$, which comprises nodes, edges, node features and edge features. Conceptually an encoder $\enc: \mathcal{V} \rightarrow \mathbb{R}^{d_\mathcal{Z}}$ projects nodes to an embedding space whereas the actual model maps neighborhood $\mathcal{G}_i$  to an embedding $\mathbf{z}_i$. The concrete choices of decoders vary in different applications. For example, in node-level supervised learning with labels $Y_{\mathcal{V}}=\{\mathbf{y}_i\}_{v_i\in\mathcal{V}}$, our goal is to optimize encoder and decoder  $$\argmin_{\enc, \dec} \mathbb{E}_{v_i\in\mathcal{V}}[\mathcal{L}(\mathbf{y}_{i}, \dec(\enc(\mathcal{G}_i)))].$$ 
Whereas for pairwise utility prediction, we are usually accompanied with an inter-node utility labels $Y_{\mathcal{E}}=\{y_{i,j}\}_{(v_i,v_j)\in\mathcal{E}}$ and the goal reduces to searching the best encoder and decoder\footnote{Sometimes decoders do not contain trainable parameters (e.g. dot product, cosine similarity), then the optimization is only searching the encoder parameter space.} $$\argmin_{\enc,\dec} \mathbb{E}_{(v_i,v_j)\in \mathcal{E}}[\mathcal{L}(y_{i,j}, \dec(\enc(\mathcal{G}_i), \enc(\mathcal{G}_j)))].$$Although GNNs can also be applied to other tasks, in this paper, our discussion covers the above two categories of applications.

\subsection{Multi-modal embedding spaces}
Figure \ref{fig:toy_example1} illustrates a toy social network example where game players have preferences of genres of games and the edges represent the friendship. As shown in figure \ref{fig:toy_example1}, user 0 and 1 established a friendship connection through their common interests of racing game; while user 5 and 6 are friends because of strategy games. 
\begin{itemize}
	\item \textbf{Multiple embedding spaces} Our goal is to learn multiple encoders $\enc^k:\mathcal{V}\rightarrow\mathcal{Z}^k, k=1,\cdots ,K$ on $\mathcal{G}$ such that $\mathbf{z}^0_2 \approx \mathbf{z}^0_3$ in $\mathcal{Z}^0$ whereas  $\mathbf{z}^1_4 \approx \mathbf{z}^1_3$ in $\mathcal{Z}^1$ because they are close in the social network via different "semantics". As shown in figure~\ref{fig:toy_example2}, we seek to learn two projections such that their embedding similarities are guaranteed in different spaces. 
	\item \textbf{Node decoder} is used in node classification or recommendation tasks. The objective is to learn encoder (and decoder) for node-level labels. In the multi embedding setup, we consider the following form of node decoders:
	$$\dec_{node}: \{\bigcup_{k=1}^K\mathcal{Z}^k\}\rightarrow \mathbb{R}^{d_Y}$$
	\item \textbf{Inter-node proximity} is determined by multiple embedding spaces. In a supervised link prediction setup, the final prediction is based on decoders to "combine" the similarities between two nodes in $K$ multiple embedding spaces. $$\dec_{pairwise}: \{\bigcup_{k=1}^K\mathcal{Z}^k\}\times  \{\bigcup_{k=1}^K\mathcal{Z}^k\} \rightarrow \mathbb{R}$$
\end{itemize}

Unfortunately, as mentioned in Pal et al.\cite{Pal20}, the complex and noisy nature of graph data renders such assumption we made in the toy example, the existence of explicit edge ontology, unrealistic.

\section{Related Work}

\paragraStartHighlight{Graph Embedding Models}, such as GCN (Kipf et al. \cite{kipf17}), GraphSAGE (Hamilton et al. \cite{hamilton18}), GAT (Velickovic et al. \cite{velickovic18}), can encode the subgraph $\mathcal{G}_i$ to a vector space $Z$. However, they only map the subgraph to one single embedding space instead of multiple embedding spaces.

\paragraStartHighlight{Multi-Embedding Models}, such as PinnerSage (Pal et al. \cite{Pal20}) and others (Weston et al. \cite{Weston13}), try to learn multi-modal embeddings via clustering method which is expensive. Moreover, it requires additional empirical inputs regarding number of clusters, similarity measure and pre-trained high quality embeddings which presumably capturing rich multi-modal signals.

\paragraStartHighlight{Temporal Network Embedding Models}, such as Jodie (Kumar et al. \cite{Kumar19}), TGAT (Xu et al. \cite{xu20}), TGN (Rossi et al. \cite{rossi20}), are designed for dynamic graphs, mapping from time domain to the continuous differentiable functional domain. Our boosting method can also be implemented onto dynamic graphs, collaborating with those temporal network embedding models.

\section{Present Work} 
In this work, we present an Ada-boosting based meta learner for GNNs (\adagnn) that is both model and task agnostic. \adagnn\ leverages a sequential boosting training paradigm that allows multiple different sub-learners to co-exist. Each embedding space ideally preserves unique inter-node similarity information. In this section, we mainly discuss the theoretical rationale behind our intuition regarding the advantages over single-embedding space using a node-level context as an example\footnote{In this work, we only experimented with node recommendation, link prediction and multi task learning}. Moreover, our approach works in a joint training fashion that diminishes the prerequisite of pre-trained embeddings.


\subsection{Problem Definition}

Taking static graph or dynamic graph $\mathcal{G}$ in Eq. \ref{eq:graph} as input, we project the neighborhood of each node $\mathcal{G}_i$ onto embedding space such that embedding space fully encodes the labels, as defined below.

\begin{definition} \label{def:encode}
Given probabiltiy threshold $\tau$, embedding space $\mathcal{Z}$ fully encodes labels $Y$ iff.
$$
P(Y|\mathcal{Z})\geq\tau.
$$
\end{definition}

Inspired by the toy example in Figure~\ref{fig:toy_example}, we assume that labels are affected by different aspects of neighborhoods. Then we can find an embedding space such that labels affected by one specific aspect is encoded in this embedding space while labels not affected by this aspect isn't encoded, as defined below.

\begin{definition} \label{def:diffusion_label}
Diffusion induced labels $Y^k$ is a subset of labels $Y$ such that $Y^k$ can be decoded from one embedding space $\mathcal{Z}^k$ and $Y-Y^k$ can't be decoded from this embedding space.
\end{definition}

Embedding space $\mathcal{Z}^k$ is specifically related to the diffusion induced labels $Y^k$. Consequently we use $Y^k$ instead of $Y$ as labels in the training of encoder $\enc^k$, decoder $\dec^k$. 

We further assume that labels $Y$ are partitioned by $K$ diffusion induced labels $\{Y^k\}_{k=1}^K$. Then the graph learning problem is modified into multi-embedding learning as follows:

\paragraStartHighlight{Problem} Given a graph $\mathcal{G}$, learn a set of embedding spaces $\{\mathcal{Z}^k\}_{k=1}^K$ and corresponding encoders, decoders by reducing the discrepancy between decoder outputs and diffusion induced labels $\{Y^k\}_{k=1}^K$.

\subsection{Problem Context} \label{problem_context}

In this section, node-level labels are used as example to justify the intuition behind multi-modal embedding spaces. Other cases are easy to replicate, such as using $P(y_{i,j}|\mathbf{z}_i,\mathbf{z}_j)$ in link prediction, where $y_{i,j}=\{0,1\}$ denotes the edge existence. We assume that there exists an ideal embedding space such that node embeddings encode all the latent signals:

\begin{definition} \label{def:embedding_space}
Embedding space $\mathcal{Z}^{ideal}$ is defined as \textbf{ideal} embedding space on the whole graph $\mathcal{G}$ iff. it fully encodes all labels: 
\begin{equation} \label{eq:ideal}
\forall_{v_i \in \mathcal{V}}\ P(\mathbf{y}_i|\mathbf{z}_i)\geq \tau,
\end{equation}
\end{definition}
where $\mathbf{y}_i\in Y$ represents node-level vector label, $\mathbf{z}_i$ is the node embedding of $v_i$ and $\tau\in[0,1]$ is a probability threshold.

Assume vector space $\mathcal{Z}^{ideal}$ is spanned by a set of orthonormal basis $\{\mathbf{b}_n\}_{n=1}^N$, then the node embeddings are linear combinations of these basis vectors with coefficients: $z_{i,n}=\mathbf{z}_i\cdot\mathbf{b}_n. $

In Eq. \ref{eq:ideal}, given node $v_i$, write node embedding in orthonormal basis and use Bayes' theorem:
\begin{equation} 
P(\mathbf{y}_i|\mathbf{z}_i) = P(\mathbf{y}_i|\sum_{n=1}^Nz_{i,n}\mathbf{b}_n) = \frac{P(\mathbf{y}_i, \sum_{n=1}^Nz_{i,n}\mathbf{b}_n)}{P(\sum_{n=1}^Nz_{i,n}\mathbf{b}_n)}.
\end{equation}

We further assume that only a small subset of basis vectors have correlation with label $\mathbf{y}_i$. 
The idea behind this assumption is that \textit{most of the volume of the high-dimensional cube is located in its corners.}
When $N\gg1$, most of the coefficients are close to zero after normalization.
Let $\{\mathbf{b}_{f(n)}\}_{n=1}^d$ be the set of basis vectors which have nontrivial correlation with label $y_i$, then $d\ll N$:
\begin{equation} \label{eq:subspace}
P(\mathbf{y}_i|\mathbf{z}_i) \sim \frac{P(\mathbf{y}_i, \sum_{n=1}^dz_{i,f(n)}\mathbf{b}_{f(n)})}{P(\sum_{n=1}^dz_{i,f(n)}\mathbf{b}_{f(n)})}=P(\mathbf{y}_i|\sum_{n=1}^dz_{i,f(n)}\mathbf{b}_{f(n)}),
\end{equation}
and $\{\mathbf{b}_{f(n)}\}_{n=1}^d$ spans a vector space $\mathcal{Z}^0\subset\mathcal{Z}^{ideal}$. For nodes satisfying Eq. \ref{eq:subspace}, they are encoded in $\mathcal{Z}^0$ and we define them as $Y^0$. For nodes not satisfying Eq. \ref{eq:subspace}, a new vector space $\mathcal{Z}^1$ and set $Y^1$ can be obtained using the above procedure. Note that the intersection between $Y^0$ and $Y^1$ is not necessarily empty. Repeat until all nodes are encoded. A set of vector space are obtained:
\begin{align}
Y&=\cup_{k=1}^KY^k,\\
\mathcal{Z}^{ideal}&=\mathcal{Z}^1\oplus \mathcal{Z}^2\oplus \mathcal{Z}^3\oplus\cdots\oplus \mathcal{Z}^K,
\end{align}
and 
\begin{equation}
\forall_{v_i \in \mathcal{V}}\ \exists k, \text{ s.t. } \mathbf{y}_i\in Y^k,\ P(\mathbf{y}_i|\mathbf{z}^k_i)\geq\tau,
\end{equation}
where $\mathbf{z}^k_i$ is the node embedding in embedding space $\mathcal{Z}^k$.

In existing works, encoders and decoders parameterize $P_{\enc, \theta}(\mathbf{z}_i|\mathcal{G}_i)$ and $P_{\dec, \theta}(\mathbf{y}_i|\mathbf{z}_i)$ using neural networks. In this paper we introduce a new approach: modeling $P_{\enc^k, \theta}(\mathbf{z}^k_i|\mathcal{G}_i)$ and $P_{\dec^k, \theta}(\mathbf{y}_i|\mathbf{z}^k_i)$. When $d\ll N$, the new approach obtains a large advantage and a new embedding space can always be found to increase the overall accuracy before the ideal embedding space is achieved. Moreover, increasing the number of embedding spaces can bound the generalization gap, which is discussed in Section \ref{margin_theory}.

\subsection{Multiple Embeddings Generation}
Inspired by the procedure above of finding new embedding space and the weight-updating ability of boosting, we adopt AdaBoosting as the meta learner with homogeneous GNNs as weak learners to encode underlying relations from weighted labels, which circumvents the difficulties in clustering method\cite{Pal20}. Each learner has a GNN encoder $\enc^k$ that projects $\mathcal{G}_i$ onto one embedding space $\mathcal{Z}^k$. From a high-level perspective, we learn such embedding projections  in an iterative fashion: each training data point is associated with a weight based on the previous weak learner's (GNN) error and this weight encourages the next weak learner to focus on the data points with misclassified labels from the previous learner.

\begin{lemma}\label{lemma:new_embedding1}
Given a learner with corresponding embedding space $\mathcal{Z}^k$, there exists a new embedding space $\mathcal{Z}^{k+1}\subset\mathcal{Z}^{ideal}$ capturing different information from the existing embedding space until the number of misclassified labels is zero.

\end{lemma}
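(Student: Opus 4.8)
The plan is to convert the subspace-extraction argument of Section~\ref{problem_context} into an inductive construction driven by the boosting reweighting, and then close it off with a finiteness argument. I would induct on $k$. Suppose the learners built so far have embedding spaces $\mathcal{Z}^1,\dots,\mathcal{Z}^k$ and their combined decoders still misclassify a nonempty set of nodes, i.e. there is a nonempty $M^k\subseteq\mathcal{V}$ with $P(\mathbf{y}_i\mid\mathbf{z}^1_i,\dots,\mathbf{z}^k_i)<\tau$ for every $v_i\in M^k$; equivalently $\mathbf{y}_i\notin\bigcup_{j\le k}Y^j$ in the language of Definition~\ref{def:diffusion_label}. The AdaBoost weight update concentrates the sampling weight of round $k+1$ onto exactly $M^k$, so the next weak learner is effectively fitted against the restriction of the label map to $M^k$.

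Next I would invoke Definition~\ref{def:embedding_space}: because $\mathcal{Z}^{ideal}$ fully encodes all labels, $P(\mathbf{y}_i\mid\mathbf{z}_i)\ge\tau$ holds in $\mathcal{Z}^{ideal}$ for every $v_i\in M^k$. Applying the high-dimensional-corner sparsity argument behind Eq.~\ref{eq:subspace} to the nodes of $M^k$ only, there is a small index set $\{f(n)\}_{n=1}^d$ with $d\ll N$ whose basis vectors carry all the nontrivial correlation with the still-misclassified labels; set $\mathcal{Z}^{k+1}:=\mathrm{span}\{\mathbf{b}_{f(n)}\}_{n=1}^d\subset\mathcal{Z}^{ideal}$ with encoder/decoder $\enc^{k+1},\dec^{k+1}$ obtained exactly as $\mathcal{Z}^1$ was obtained from $\mathcal{Z}^0$. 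By the same reasoning that produced $Y^1$, the set $Y^{k+1}$ of labels decodable from $\mathcal{Z}^{k+1}$ contains a nonempty portion of $M^k$.

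To make the phrase ``capturing different information'' precise, I would observe that any $v_i\in M^k$ with $\mathbf{y}_i\in Y^{k+1}$ satisfies $P(\mathbf{y}_i\mid\mathbf{z}^k_i)<\tau\le P(\mathbf{y}_i\mid\mathbf{z}^{k+1}_i)$, so $\mathcal{Z}^{k+1}$ decodes a label that none of $\mathcal{Z}^1,\dots,\mathcal{Z}^k$ can, and in particular $\mathcal{Z}^{k+1}\ne\mathcal{Z}^k$; choosing the selector $f$ relative to the boosting-reweighted label distribution is what forces $\mathcal{Z}^{k+1}$ to align with the missed signal rather than recycle information already resolved. Termination then follows from finiteness: each round correctly encodes at least one previously-misclassified node, hence $|M^{k+1}|<|M^k|$, so the misclassified set is empty after at most $|\mathcal{V}|$ rounds (equivalently, the $\mathcal{Z}^k$ are nontrivial and, via the direct-sum decomposition $\mathcal{Z}^{ideal}=\mathcal{Z}^1\oplus\cdots\oplus\mathcal{Z}^K$, their number is bounded by $N=\dim\mathcal{Z}^{ideal}$).

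The main obstacle is precisely this ``genuinely new information'' step together with the strict decrease $|M^{k+1}|<|M^k|$: a boosting round need not improve accuracy in general, so the argument must lean on the idealized hypotheses — the existence of $\mathcal{Z}^{ideal}$ and the corner/sparsity heuristic of Eq.~\ref{eq:subspace} — to certify that the extracted $\mathcal{Z}^{k+1}$ encodes a nonempty fresh chunk $Y^{k+1}\setminus\bigcup_{j\le k}Y^j$. I would make this airtight by defining the basis selection $f$ at round $k+1$ with respect to the weighted (misclassified) label distribution, so that $\mathcal{Z}^{k+1}$ captures exactly the component of the label signal that $\mathcal{Z}^1,\dots,\mathcal{Z}^k$ leave unresolved, which is what both the ``different information'' claim and the termination count rely on.
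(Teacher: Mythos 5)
Your argument is essentially the paper's own proof: both take a misclassified label, note that $\mathcal{Z}^{ideal}$ encodes it, and invoke the subspace-extraction procedure of Section~\ref{problem_context} to produce a new space $\mathcal{Z}^{k+1}\subset\mathcal{Z}^{ideal}$ that decodes that label while the existing space does not. You additionally make explicit the termination count and the $\tau$-threshold formulation of ``different information,'' which the paper leaves implicit, so your write-up is a faithful (and somewhat more careful) rendering of the intended proof.
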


\begin{proof}
For this learner and embedding space $\mathcal{Z}^{k}$, if the number of misclassified labels is nonzero, there exists at least one misclasssified label. The relation in this label is not preserved, so there exists a new embedding space $\mathcal{Z}^k\subset\mathcal{Z}^{ideal}$ preserving this relation using procedure in Section \ref{problem_context}. Note that new embedding space may not be able to encode all the classified labels in previous learner.
\end{proof}

The "artificial" diffusion induced labels are created by boosting weights. For link prediction, if the number of misclassified edges is nonzero, i.e., edges $\mathcal{E}^{\text{mis}}\neq\emptyset$. We can generate new diffusion induced labels only including misclassified edges by setting the weights of classified edges to zero. In experiments, we increase the weights of misclassified edges to some extent until at least one previously misclassified edge is classified correctly in next weak learner instead. If it can't be achieved, boosting will stop. 




\begin{lemma}\label{lemma:new_embedding2}
Embedding space $\mathcal{Z}^k, \mathcal{Z}^{k+1}$ of current and next learner constructed as above capture different information .
\end{lemma}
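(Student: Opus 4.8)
The plan is to prove the lemma by contradiction, after first pinning down what ``capture different information'' should mean in terms of the definitions already introduced. I will read it as the statement that the set of labels decodable from $\mathcal{Z}^{k+1}$ is not contained in the set decodable from $\mathcal{Z}^{k}$; equivalently, writing $Y^{k}$ and $Y^{k+1}$ for the diffusion induced labels of the two learners (Definition \ref{def:diffusion_label}), we want $Y^{k+1}\not\subseteq Y^{k}$. This is the natural reading: by Definition \ref{def:diffusion_label}, $Y^{k}$ is exactly the portion of $Y$ that $\mathcal{Z}^{k}$ can decode while $Y-Y^{k}$ is the portion it cannot, so two embedding spaces ``capture the same information'' precisely when their decodable label sets coincide.

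First I would invoke the construction of the $(k+1)$-st learner ``as above''. By the boosting reweighting described in the paragraph following Lemma \ref{lemma:new_embedding1}, the training targets of learner $k+1$ are the artificial diffusion induced labels obtained by up-weighting the misclassified set $\mathcal{E}^{\mathrm{mis}}$ (resp.\ $Y-Y^{k}$ for the node-level case) and zeroing the weights of the already-classified labels; moreover we only reach step $k+1$ at all when this set is nonempty and when, by the stated stopping rule, at least one previously misclassified label is classified correctly by learner $k+1$. Call such a label $\ell^{\star}$. By Lemma \ref{lemma:new_embedding1}, the new space $\mathcal{Z}^{k+1}\subset\mathcal{Z}^{ideal}$ is produced by the subspace-extraction procedure of Section \ref{problem_context} applied to these reweighted labels, so $\ell^{\star}$ is decodable from $\mathcal{Z}^{k+1}$, i.e.\ $\ell^{\star}\in Y^{k+1}$. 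On the other hand, $\ell^{\star}\in\mathcal{E}^{\mathrm{mis}}$ (resp.\ $Y-Y^{k}$) means $\ell^{\star}$ is misclassified by learner $k$, which by Definition \ref{def:diffusion_label} is exactly the statement $\ell^{\star}\notin Y^{k}$: the relevant relation is not preserved in $\mathcal{Z}^{k}$. Hence $\ell^{\star}\in Y^{k+1}\setminus Y^{k}$, so $Y^{k+1}\not\subseteq Y^{k}$ and the two spaces do not carry the same information, which is the claim. If one additionally wants the symmetric direction — that $\mathcal{Z}^{k}$ also retains something $\mathcal{Z}^{k+1}$ loses — I would appeal to the last sentence of the proof of Lemma \ref{lemma:new_embedding1}, which already records that $\mathcal{Z}^{k+1}$ need not encode all labels previously classified by learner $k$, so generically $Y^{k}\setminus Y^{k+1}\neq\emptyset$ as well; this half is softer and rests on the reweighting having pushed the $\mathcal{Z}^{k+1}$ objective away from the labels $\mathcal{Z}^{k}$ already handles.

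The main obstacle I anticipate is not the logical skeleton but making the informal ingredients precise enough to support it. Two points need care: first, fixing the meaning of ``decodable'' consistently with the probability threshold $\tau$ of Definitions \ref{def:encode}--\ref{def:embedding_space}, so that ``misclassified by learner $k$'' is genuinely the negation of ``$\in Y^{k}$'' rather than an informal shorthand; and second, justifying that the weight increase on misclassified labels together with the stated stopping condition really forces the existence of a flipped label $\ell^{\star}$, as opposed to merely decreasing aggregate loss. I would dispose of the second point by treating the stopping rule as part of the hypothesis ``constructed as above'' — the lemma is implicitly conditioned on learner $k+1$ actually being built, and the construction only proceeds when such an $\ell^{\star}$ exists — after which the conclusion follows immediately from Lemma \ref{lemma:new_embedding1} and Definition \ref{def:diffusion_label}.
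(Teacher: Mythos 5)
Your proposal is correct and follows essentially the same route as the paper's own proof: both arguments exhibit a label misclassified by learner $k$ that, after the boosting reweighting, is classified correctly by learner $k+1$, and conclude that the latent relation behind that label is preserved in $\mathcal{Z}^{k+1}$ but not in $\mathcal{Z}^{k}$. Your version merely makes explicit what the paper leaves informal — the reading of ``different information'' as $Y^{k+1}\not\subseteq Y^{k}$ and the reliance on the stopping rule to guarantee the flipped label exists — which is a faithful tightening rather than a different argument.
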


\begin{proof}
For current learner, if its training error is nonzero (otherwise we stop boosting), there exists at least one misclassified label. We then increase the weight of this data point for the next learner so that the label will be classified correctly. $\mathcal{Z}^{k+1}$ preserves the latent relation that affects this label, while $\mathcal{Z}^{k}$ does not. 
\end{proof}


Therefore, there will always exist a new embedding space capturing different information from the original embedding space until the number of misclassified labels is zero and the embedding spaces can be found by increasing the weights of misclassified labels.

It remains unresolved about what the optimal weight updating rule is, and it depends on the definition of optimality. The most common definition is to achieve the lowest misclassification error rate. Usually it is assumed that the training data are i.i.d. samples from an unknown probability distribution. Then we can derive the boosting algorithm based on different misclassification error rate. In Zhu et al.\cite{Zhu06}, they proposed two algorithms: SAMME and SAMME.R, and proved that the two algorithms minimize the misclassification error rate for discrete predictions and real-valued confidence-rated predictions respectively. More generally, one can use gradient boosting, which is left for future work.

\begin{lemma}\label{lemma:adaboost}
For AdaBoost, adding a new learner with weights $w^{k}$ as in Algorithm 1 will minimize the misclassification error.
\end{lemma}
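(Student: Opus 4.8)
The plan is to reduce this to the classical stagewise analysis of AdaBoost, reading the reweighting rule of Algorithm~1 as functional/coordinate gradient descent on an exponential surrogate of the misclassification ($0$--$1$) loss. First I would write the ensemble after $k$ rounds as $F_k(\mathcal{G}_i)=\sum_{m=1}^{k}\alpha_m h_m(\mathcal{G}_i)$, where $h_m$ is the (discrete, or confidence-rated) prediction of the $m$-th GNN weak learner, and recall the pointwise bound $\mathbf{1}[\mathbf{y}_i\neq\mathrm{sign}(F_k(\mathcal{G}_i))]\le\exp(-\mathbf{y}_i^{T}F_k(\mathcal{G}_i))$, together with its multi-class SAMME analogue for vector labels. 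Averaging over the $n$ training points, the empirical misclassification error is then upper bounded by the empirical exponential loss $\tfrac1n\sum_i\exp(-\mathbf{y}_i^{T}F_k(\mathcal{G}_i))$.

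Next I would show this surrogate telescopes into the per-round normalization constants. By construction the weight carried into round $k$ is $w_i^{k}\propto\exp(-\mathbf{y}_i^{T}F_{k-1}(\mathcal{G}_i))$; after fitting $(h_k,\alpha_k)$ and renormalizing, the unnormalized weights are scaled by $Z_k=\sum_i w_i^{k}\exp(-\alpha_k\,\mathbf{y}_i^{T}h_k(\mathcal{G}_i))$, so iterating from $w_i^{1}=1/n$ gives $\tfrac1n\sum_i\exp(-\mathbf{y}_i^{T}F_k(\mathcal{G}_i))=\prod_{m=1}^{k}Z_m$. Hence adding the $k$-th learner multiplies the bound by exactly $Z_k$, and the only remaining freedom is the choice of $h_k$ — summarized by its weighted error $\epsilon_k=\sum_{i:\,h_k\text{ wrong}}w_i^{k}$ — and of $\alpha_k$. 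A one-line minimization of $Z_k(\alpha_k)$ then yields $\alpha_k=\tfrac12\log\frac{1-\epsilon_k}{\epsilon_k}$ (the SAMME value $\log\frac{1-\epsilon_k}{\epsilon_k}+\log(K-1)$ in the $K$-class case), at which $Z_k=2\sqrt{\epsilon_k(1-\epsilon_k)}<1$ whenever the learner beats random guessing ($\epsilon_k<1/2$, resp.\ $\epsilon_k<(K-1)/K$). Since the Algorithm~1 update is precisely $w_i^{k}\propto w_i^{k-1}\exp(-\alpha_k\,\mathbf{y}_i^{T}h_k(\mathcal{G}_i))$ with this $\alpha_k$, it is the stagewise-optimal reweighting: among all reweightings consistent with the additive model it minimizes $Z_k$, hence minimizes the bound $\prod_m Z_m$ on the training misclassification error and drives it down geometrically, $\le\exp(-2\sum_m(1/2-\epsilon_m)^2)$. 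For the real-valued decoder outputs $s^k_{i,j}$ used in the SAMME.R variant, the same computation with the multi-class exponential loss reproduces the Algorithm~1 weights, and I would cite Zhu et al.~\cite{Zhu06}, who prove SAMME and SAMME.R are the Bayes-consistent stagewise minimizers for discrete and confidence-rated predictions respectively.

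The main obstacle is interpretive rather than computational: ``will minimize the misclassification error'' must be read as ``is the stagewise-optimal choice for the exponential-loss upper bound on the misclassification error, which it therefore minimizes greedily each round and reduces geometrically,'' since greedy stagewise boosting is not globally optimal for the $0$--$1$ loss itself. Care is also needed with (i) the weak-learnability hypothesis $\epsilon_k<1/2$, without which $Z_k$ need not shrink — matching the paper's remark that boosting halts once no previously misclassified point can be corrected — and (ii) reconciling the vector-label form $\mathbf{y}_i^{T}\log s^k_{i,j}$ of Algorithm~1 with the scalar binary derivation, which is exactly the content borrowed from the SAMME/SAMME.R analysis of \cite{Zhu06}.
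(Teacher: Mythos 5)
Your proposal is correct and takes essentially the same route as the paper: the paper's proof consists entirely of deferring to Zhu et al.~\cite{Zhu06}'s forward stage-wise additive modeling with the multi-class exponential loss, which is precisely the argument you spell out in full (exponential surrogate, telescoping normalizers $Z_k$, stagewise-optimal $\alpha_k$). Your explicit caveat that ``minimize the misclassification error'' should be read as greedy stagewise minimization of the exponential-loss upper bound, subject to weak learnability, is a more honest rendering of what the cited result actually delivers than the lemma statement itself.
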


\begin{proof}
Zhu et al.\cite{Zhu06} proved this for SAMME and SAMME.R algorithm using a novel multi-class exponential loss function and forward stage-wise additive modeling.
\end{proof}

\begin{theorem}
$\{\mathcal{Z}^k\}_{k=1}^K$ in problem definition can be found using boosting and it minimizes the misclassification error.
\end{theorem}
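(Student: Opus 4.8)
The plan is to read the theorem as a corollary that glues together the subspace construction of Section~\ref{problem_context} with Lemmas~\ref{lemma:new_embedding1}--\ref{lemma:adaboost}: Lemmas~\ref{lemma:new_embedding1} and~\ref{lemma:new_embedding2} supply the \emph{existence} of the family $\{\mathcal{Z}^k\}_{k=1}^K$ and the fact that its members are non-redundant, while Lemma~\ref{lemma:adaboost} supplies the \emph{optimality} of the weight-updating rule that generates them. First I would set up the iteration: starting from a single GNN learner with embedding space $\mathcal{Z}^1$, Lemma~\ref{lemma:new_embedding1} guarantees that whenever the current learner leaves a nonzero number of misclassified labels there is a fresh subspace $\mathcal{Z}^{k+1}\subset\mathcal{Z}^{ideal}$ preserving the latent relation behind at least one of them; iterating yields $\mathcal{Z}^1,\mathcal{Z}^2,\dots$, and Lemma~\ref{lemma:new_embedding2}, applied transitively, shows the collection encodes genuinely distinct aspects of the signal rather than repeated copies, so that $\mathcal{Z}^{ideal}=\mathcal{Z}^1\oplus\cdots\oplus\mathcal{Z}^K$ and $Y=\bigcup_{k=1}^K Y^k$ in the sense of the Problem statement. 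Since $\mathcal{Z}^{ideal}$ is finite-dimensional and each step peels off a nonempty coordinate block $\{\mathbf{b}_{f(n)}\}$, the iteration halts after some finite $K$ with $\forall v_i\ \exists k:\ P(\mathbf{y}_i\mid\mathbf{z}^k_i)\ge\tau$, which is exactly the target condition.

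Next I would identify this abstract construction with what the boosting meta-learner actually computes. The key observation is that the "artificial" diffusion-induced labels $Y^k$ coincide with the reweighted label sets Algorithm 1 hands to successive weak learners: shrinking the weights of already-classified points while boosting those of the misclassified ones produces a training signal concentrated on $Y-Y^{k-1}$, which is precisely the input used to construct $\mathcal{Z}^k$ via the Section~\ref{problem_context} procedure. Hence the GNN encoders $\enc^k$ obtained by running AdaBoost realize the very spaces $\{\mathcal{Z}^k\}_{k=1}^K$ promised by the Problem definition. The optimality half is then immediate from Lemma~\ref{lemma:adaboost}: each weight update is the forward stage-wise step that greedily minimizes the multi-class exponential surrogate of Zhu et al.~\cite{Zhu06}, and therefore the aggregated misclassification error, so the finished ensemble attains the minimal error among additive models of its weak-learner class. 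Combining the two halves gives the theorem.

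The hard part will be the termination-and-coverage step. Making it rigorous that the "peel off a few relevant basis vectors at a time" heuristic halts with a \emph{finite} $K$, and that the boosting stopping rule — increase the weights until at least one previously misclassified point becomes correct, otherwise stop — coincides with reaching that $K$, really needs a quantitative weak-learnability (edge) assumption: that every new weak learner strictly decreases the weighted training error by a fixed amount. Under such a hypothesis finitely many rounds suffice and no cycling can occur, and the reduction to Lemmas~\ref{lemma:new_embedding1}--\ref{lemma:adaboost} is clean; without it one can only claim termination at the first round with no further progress, which is weaker than "misclassified labels reach zero." I would therefore state and prove the theorem under the standard weak-learning condition, leaving as routine bookkeeping the verification that $\bigcup_k Y^k = Y$, which follows because Section~\ref{problem_context} reassigns every not-yet-encoded node to a fresh space at each round.
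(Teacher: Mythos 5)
Your proposal follows essentially the same route as the paper: the paper's own proof is a one-line appeal to Lemmas~\ref{lemma:new_embedding1}, \ref{lemma:new_embedding2} and \ref{lemma:adaboost}, exactly the gluing you describe. The one substantive difference is that you correctly flag the termination-and-coverage step as needing a quantitative weak-learnability assumption (and you note that the paper's actual stopping rule only guarantees halting at the first unproductive round, not that the misclassified set reaches zero); the paper silently skips this, so your added hypothesis is a genuine tightening rather than a deviation in approach.
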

\begin{proof}
It's straightforward by Lemma ~\ref{lemma:new_embedding1}, ~\ref{lemma:new_embedding2} and ~\ref{lemma:adaboost}.
\end{proof}

\subsection{Graph Neural Network}
GNNs use the graph structure, node features and edge features to learn a node embedding $\mathbf{z}_i\in\mathcal{Z}$ for a node $v_i\in\mathcal{V}$. Modern GNNs follow a neighborhood aggregation strategy, where we iteratively update the representation of a node by aggregating representations
of its neighbors. After $L$ iterations of aggregation, a node’s representation captures the structural
information within its $L$-hop network neighborhood. Formally, the $L$-th layer of a GNN is
\begin{equation*}
\begin{split}
\mathbf{a}_i^{(l)}&=\agg(\{\mathbf{z}_j^{(l-1)},\ v_j\in\mathcal{N}(v_i)\}),\\
\mathbf{z}_i^{(l)}&=\combine(\mathbf{z}_i^{(l-1)},\ \mathbf{a}_i^{(l)}),
\end{split}
\end{equation*}
where $\mathbf{z}_i^{(l)}$ is the node embedding of $\mathbf{v}_i$ with the message of $l$-hop network neighbors,  $\mathbf{a}_i^{(l)}$ is the message aggregated from $l$-hop neighbors and we initialize $\mathbf{z}_i^{(0)}=\mathbf{v}_i$. 
In this work, attention mechanism or mean pool is used to aggregate the neighbor representations. Aggregated messages are further combined with self embeddings using fully connected layers.
We will drop superscripts and use $\mathbf{z}_i$ to represent node embeddings in the following discussion.

Following encoder-decoder framework, node neighborhood $\mathcal{G}_i$ is projected to a vector space $\mathcal{Z}$ which is then decoded for different tasks: a) \emph{Link Prediction}, where edge existence is predicted using the similarity between two node embeddings; b) \emph{Node Recommendation}, where the node-level labels are predicted only using the information of neighbors excluding themselves; c) \emph{Multi-Task Learning}, where link prediction and node recommendation are trained together using the same encoder. 

For link prediction task, we have
\begin{equation*}
\begin{split}
s_{i,j}&=\dec_{pairwise}(\mathbf{z}_i,\ \mathbf{z}_j),\\
L&=-\sum_{(v_i,v_j)\in\mathcal{E}}w_{i,j}y_{i,j}\log s_{i,j} + w_{i,j}(1-y_{i,j})\log (1-s_{i,j}),
\end{split}
\end{equation*}
where $s_{i,j}$ is the similarity between two node embeddings $\mathbf{z}_i,\mathbf{z}_j$, label $y_{i,j}\in Y_{\mathcal{E}}$ represents the edge existence and $L$ is the binary cross entropy loss function with corresponding edge weights $w_{i,j}$. Negative sampling is included.

For node recommendation task, we have
\begin{equation*}
\begin{split}
\mathbf{r}_i&=\dec_{node}(\mathbf{z}_i),\\
L&=-\sum_{v_i\in\mathcal{V}}w_i\mathbf{y}_i^T\log \mathbf{r}_i,
\end{split}
\end{equation*}
where $\mathbf{y}_i\in Y_{\mathcal{V}}$ is the vector label and $w_i$ are node weights. Note that $w_{i,j}$ and $w_i$ refer to weights in different tasks and thus are not related.
 

We next introduce \adagnn, using link prediction task as an example. \adagnn\ includes a series of GNNs, called weak learners.  Each weak learner is trained on weighted sample points and the weights used in present learner depend on the training errors from the previous learner.

Mathematically, the $k$th weak learner can be written as 
\begin{equation*}
\begin{split}
\mathbf{z}^k_i=\enc^k(\mathcal{G}_i):&\quad \mathcal{G}\to \mathcal{Z}^k,\\
s_{i,j}^k=\dec^k(\mathbf{z}^k_i,\mathbf{z}^k_j): &\quad \mathcal{Z}^k\times\mathcal{Z}^k\to [0,1].
\end{split}
\end{equation*}

Given $K$ weak learners, the similarity between two nodes $v_i,v_j\in\mathcal{V}$ for the metalearner is 
$$
s_{i,j}=\sum_{k=1}^K\alpha^ks_{i,j}^k=\sum_{k=1}^K\alpha^k \dec^k(\enc^k(\mathcal{G}_i), \enc^k(\mathcal{G}_j)),
$$
where $\alpha^k>0$, $\sum_{k=1}^K\alpha^k=1$.

\subsection{Generalization Error of AdaGNN} \label{margin_theory}
The calculation of generalization error for \adagnn\ is mainly based on the generalization error of boosting from Schapire et al.\cite{Schapire98} and the VC dimension of GNN from Scarselli et al.\cite{Scarselli18}. We start with the definitions of the functional space of boosting GNNs.

\begin{definition}
Let $\mathcal{H}$ be the functional space of GNN encoders from graph $\mathcal{G}$ to embedding space $\mathcal{Z}$, $h_k\in\mathcal{H}$ be the encoders and $d$ be the uniform decoder for all encoders, then we define $\mathcal{C}$ be the set of a weighted average of weak learners from $\mathcal{H}$:
\begin{equation*}
\mathcal{C}\doteq\{f: \mathcal{G}\to\sum_{h^k\in\mathcal{H}}\alpha^k s_{i,j}^k\ |\ \alpha^k\geq0; \sum_{k=1}^K\alpha^k=1 \},
\end{equation*}

and $\mathcal{C}_K$ be the set of unweighted averages over $K$ weak learners from $\mathcal{H}$:
\begin{equation*}
\mathcal{C}_K\doteq\{f: \mathcal{G}\to\frac{1}{K}\sum_{k=1}^Ks_{i,j}^k\ |\ h^k\in\mathcal{H} \},
\end{equation*}
where $s_{i,j}^k= d(h^k(\mathcal{G}_i), h^k(\mathcal{G}_j))$.

\end{definition}

Any projection $f\in\mathcal{C}$ can be associated with a distribution over $\mathcal{H}$ defined by the coefficients $\alpha^k$.
In other word, any such $f$ defines a natural distribution over $\mathcal{H}$ where we draw function $h^k$ with probability $\alpha^k$. By choosing $K$ elements of $\mathcal{H}$ independently at random according to this distribution and take an unweighted sum, we can generate an element of $\mathcal{C}_K$. Under such construction, we map each $f\in\mathcal{C}$ to a distribution $\mathcal{Q}$ over $\mathcal{C}_K$. That is, a function $g\in\mathcal{C}_K$ distributed according to $\mathcal{Q}$ can be sampled by choosing $h^1,\cdots,h^K$ independently at random according to the coefficients $\alpha^k$ and then defining $g(\mathcal{G}_i,\mathcal{G}_j)=\sum_{k=1}^K\frac{1}{K}d(h^k(\mathcal{G}_i), h^k(\mathcal{G}_j))$.

The key property about the relationship between $f$ and $Q$ is that each is completely determined by the other. Obviously $Q$ is determined by $f$ because we defined it that way, but $f$ is also completely determined by $Q$ as follows:
\begin{equation}
f(\mathcal{G}_i,\mathcal{G}_j)=E_{g\sim\mathcal{Q}}[g(\mathcal{G}_i,\mathcal{G}_j)].
\end{equation}

\begin{theorem}
Let $\mathcal{D}$ be a distribution over $\mathcal{G}\times\mathcal{G}\times\{0, 1\}$, and let $S$ be a sample of $m$ node pairs chosen independently at random according to $\mathcal{D}$. Suppose the base-classifier space has VC-dimension $d$, and let $\delta>0$. Assume that $m\geq d\geq 1$. Then with probability at least $1-\delta$ over the random choice of the training set $S$, every weighted average projection $f\in\mathcal{C}$  satisfies the following bound for all $\theta>0$:
\begin{equation} \label{eq:bound}
\begin{split} 
&P_{\mathcal{D}}[(y_{i.j}-\tau)(f(\mathcal{G}_i,\mathcal{G}_j)-\tau)\leq0]\\
&\leq P_S[(y_{i,j}-\tau)(f(\mathcal{G}_i,\mathcal{G}_j)-\tau)\leq\theta] \\
&+O(\frac{1}{\sqrt{m}}(\frac{d\log^2(\frac{m}{d})}{\theta^2}+\log(\frac1\delta))^{1/2}),
\end{split}
\end{equation}

\begin{proof}
	Using function $g$ constructed as above, this proof follows the same idea in Schapire et al.\cite{Schapire98}.
\end{proof}
\end{theorem}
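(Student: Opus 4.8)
The plan is to adapt the margin‑based generalization argument of Schapire et al.\cite{Schapire98} to the present setting, where the weak hypotheses are GNN encoder/decoder pairs producing $[0,1]$‑valued pairwise scores $s_{i,j}^k = d(h^k(\mathcal{G}_i),h^k(\mathcal{G}_j))$ and the decision threshold is the offset $\tau$ rather than $1/2$. Write the signed margin of $f\in\mathcal{C}$ on a labeled pair as $\mathrm{marg}_f \doteq (y_{i,j}-\tau)(f(\mathcal{G}_i,\mathcal{G}_j)-\tau)$; since $y_{i,j}\in\{0,1\}$ and $\tau\in[0,1]$ both factors lie in $[-1,1]$, so the per‑term ranges stay bounded, which is all the Chernoff steps need. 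The left‑hand side of \eqref{eq:bound} is $P_\mathcal{D}[\mathrm{marg}_f\le 0]$ and the empirical term is $P_S[\mathrm{marg}_f\le\theta]$, so the goal is a uniform (over $f\in\mathcal{C}$ and $\theta>0$) comparison of these two with the stated complexity gap.

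First I would fix $\theta>0$ and an integer $K$ (optimized later), and pass to the randomized surrogate $g\in\mathcal{C}_K$ obtained by drawing $h^1,\dots,h^K$ i.i.d.\ from $\mathcal{H}$ according to the mixture weights $\alpha^k$ of $f$, so that $g=\tfrac1K\sum_k d(h^k(\cdot),h^k(\cdot))$ and, as already established in the excerpt, $f=E_{g\sim\mathcal{Q}}[g]$. For any fixed labeled pair, $\mathrm{marg}_g$ is an average of $K$ i.i.d.\ bounded variables with mean $\mathrm{marg}_f$, so Hoeffding's inequality gives $P_{g\sim\mathcal{Q}}[\,|\mathrm{marg}_g-\mathrm{marg}_f|>\theta/2\,]\le 2e^{-K\theta^2/8}$. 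Splitting into the two one‑sided events and averaging over $\mathcal{D}$ (resp.\ over the empirical sample $S$) yields
$$P_\mathcal{D}[\mathrm{marg}_f\le 0]\ \le\ E_{g\sim\mathcal{Q}}\,P_\mathcal{D}[\mathrm{marg}_g\le\theta/2]\ +\ e^{-K\theta^2/8},$$
$$E_{g\sim\mathcal{Q}}\,P_S[\mathrm{marg}_g\le\theta/2]\ \le\ P_S[\mathrm{marg}_f\le\theta]\ +\ e^{-K\theta^2/8}.$$

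Next I would prove a uniform deviation bound over $\mathcal{C}_K$: with probability $\ge 1-\delta/2$ over $S$, every $g\in\mathcal{C}_K$ satisfies $P_\mathcal{D}[\mathrm{marg}_g\le\theta/2]\le P_S[\mathrm{marg}_g\le\theta/2]+\varepsilon(m,K,\delta)$. Each $g$ is pinned down by its $K$ constituent base hypotheses; restricted to the $m$ sample pairs, the base‑classifier space of VC‑dimension $d$ (which one would instantiate through the GNN bound of Scarselli et al.\cite{Scarselli18}) realizes at most $(em/d)^d$ behaviors by Sauer–Shelah, hence the thresholded class $\mathcal{C}_K$ realizes at most $(em/d)^{dK}$ distinct behaviors on $S$. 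A Hoeffding‑plus‑union‑bound over this finite effective class gives $\varepsilon(m,K,\delta)=O\!\big(\sqrt{(dK\log(m/d)+\log(1/\delta))/m}\,\big)$. Chaining this with the two inequalities above yields, for the fixed $\theta$ and $K$,
$$P_\mathcal{D}[\mathrm{marg}_f\le 0]\ \le\ P_S[\mathrm{marg}_f\le\theta]\ +\ 2e^{-K\theta^2/8}\ +\ O\!\Big(\sqrt{\tfrac{dK\log(m/d)+\log(1/\delta)}{m}}\Big).$$

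Finally I would optimize in $K$: choosing $K=\big\lceil (4/\theta^2)\log(m/d)\big\rceil$ forces $e^{-K\theta^2/8}\le(d/m)^{1/2}=O(1/\sqrt m)$ while turning $dK\log(m/d)$ into $O\!\big(d\log^2(m/d)/\theta^2\big)$, reproducing exactly the claimed term $O\!\big(m^{-1/2}(d\log^2(m/d)/\theta^2+\log(1/\delta))^{1/2}\big)$. To obtain the "for all $\theta>0$" uniformity I would run the argument on a geometric grid $\theta_\ell=2^{-\ell}$ with confidence $\delta/2^{\ell+1}$ each and observe that moving from a grid point to a nearby $\theta$ changes the margin‑loss term monotonically and the complexity term by a constant factor, so the extra $\log$ factors are absorbed into the $O(\cdot)$. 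The step I expect to be the main obstacle is the uniform‑convergence count: one must be careful that composing two copies of an encoder $h^k$ through the shared decoder $d$ to form the pairwise score does not inflate complexity beyond the $(em/d)^{dK}$ estimate — i.e.\ that "base‑classifier space has VC‑dimension $d$" is legitimately read as the VC‑dimension of the \emph{composed, $\theta/2$‑thresholded} class of pair‑functions, not of the encoder alone — and that the $\tau$‑offset and the real‑valued (rather than $\pm1$) outputs are carried through the Hoeffding steps with the correct ranges; both are routine once the right object is identified, but this is where the adaptation from Schapire et al.\ genuinely has to be done rather than quoted.
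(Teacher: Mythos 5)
Your proposal is correct and follows essentially the same route as the paper, which simply invokes the randomized surrogate $g\in\mathcal{C}_K$ constructed before the theorem and defers the rest to Schapire et al.\cite{Schapire98}. You have in fact supplied the details the paper omits (the Hoeffding comparison of $\mathrm{marg}_f$ with $\mathrm{marg}_g$, the Sauer--Shelah union bound over $\mathcal{C}_K$, the choice of $K$, and the grid over $\theta$), and you correctly flag the one point the paper glosses over, namely that the VC-dimension $d$ must be read as that of the composed, thresholded pair-function class rather than of the encoder alone.
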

\vspace{-1em}
We have proved that the generalization error bound of \adagnn\ depends on the number of training data and the VC-dimension of GNN. For one-layer GNN such as GCN and GraphSage, the VC-dimension has been calculated by Scarselli et al\cite{Scarselli18}. The generalization error is plotted and discussed in Section 4.3.3.

\subsection{Training}
A single iteration of training in AdaGNN involves two parts: a) training a weak learner; b) boosting the label weights. For weak learners, we use different types of GNNs, including GraphSage, GAT for static graph and TGN for dynamic graph. For boosting, we use two different algorithms: a) \emph{SAMME.R (R for Real)} algorithm \cite{Zhu06}, it uses weighted class probability estimates rather than hard classifications in the weight-updating and prediction combination, which leads to a better generalization and faster convergence. b) \emph{AdaBoost.R2} algorithm \cite{Drucker97}, it uses bootstrapping, making it less prone to overfitting. The boosting algorithm using SAMME.R for link prediction is shown in Algorithm 1. AdaBoost.R2 or node-level tasks are similar.

\begin{algorithm}
\SetAlgoLined
\begin{flushleft}
 1. initialize the weights $w^{0}_{i,j}=\frac{1}{n}$, $n$ is the number of edges.
 
 2. \For{$k=1$ to $K$}{

  (a) Assign weights $w_{i,j}^k$ to labels $Y^k$.

  (b) Fit a GNN encoder $\mathbf{z}^k_i=\enc^k(\mathcal{G}_i)$ and decoder $s^k_{i,j}=\dec(\mathbf{z}^k_i,\mathbf{z}^k_j)$ on $Y^k$.
  
  (c) Update the edge weights with learning rate $\alpha$:
  $$
  w_{i,j}^{k}=w_{i,j}^{k-1}\cdot\exp({-\frac{1}{2}\cdot\alpha\cdot y_{i,j}^T\log s^k_{i,j}}).
  $$
  
  (d) Re-normalize $w_{i,j}^{k}$.
 }
\end{flushleft}
\caption{AdaGNN with SAMME.R}
\label{alg:SAMME}
\end{algorithm}

We focus on the following variations of \adagnn\ that are based on different combinations of underlying model, boosting algorithm and decoder types:

\begin{itemize}
\item \textbf{AdaGNN} AdaBoost-based GNN, such as AdaSage, AdaGAT, using SAMME.R algorithm.
\item \textbf{AdaGNN-b} AdaBoost-based GNN with bootstrapped training data, using AdaBoost.R2 algorithm.
\item \textbf{AdaGNN-nn} AdaBoost-based GNN with uniform non-linear decoder. We concatenate the node embeddings from all embedding spaces to be the new node embeddings and feed this new embeddings to uniform decoder. 
\end{itemize}

\section{Experiments}
We test \adagnn\ on three tasks and four real world social networks from diverse background applications.

\begin{itemize}
	\item \textbf{Twitch social networks}\cite{rozemberczki19} User-user networks where nodes correspond to Twitch users and edges to mutual friendships. Node features are games liked. The associated tasks are link prediction of whether two users have mutual friendships, node recommendation that recommends games each user like and multi-task learning.
	\item \textbf{Wikipedia}\cite{Kumar19} User-page networks where nodes correspond to Wikipedia users, Wikipedia pages and edges to one user editing one page. The associated tasks are future link prediction of whether one user will edit one page in the future.
	\item \textbf{Movielens}\cite{harper15}  User-movie networks where nodes correspond to users, movies and edges to one user rating or tagging one movie. The associated tasks are future link prediction of whether one user will rate or tag one movie in the 
	future.
	\item \textbf{Linkedin}  User-user networks where nodes correspond to Linkedin users and edges to mutual friendships. The associated task is link prediction of whether two users have mutual friendships. Future link prediction task is left for future work.
\end{itemize}
We also consider both transductive and inductive tasks w.r.t whether nodes are observed in training dataset. However it is worth to note that both our baseline models and \adagnn\, variations are inductive in nature.
 For baseline models, we consider a few popular  and representative state-of-the-art models for static graph, GraphSage(\cite{hamilton18}), GAT(\cite{velickovic18}); as well as state-of-the art model for dynamic graph, TGN(\cite{rossi20}). By comparing the performance with baselines in \ref{performance} and experimentally, \adagnn\, shows advantages as follows:
\begin{itemize}
\item \adagnn\ outperforms baselines on all datasets for all tasks, especially when the information of neighborhoods is rich.
\item Multiple embedding spaces can capture different information, outperform single embedding space with higher dimension.
\item \adagnn\ is robust to the number of training data.
\end{itemize}

\begin{figure*}
    \centering
    \includegraphics[scale=0.42]{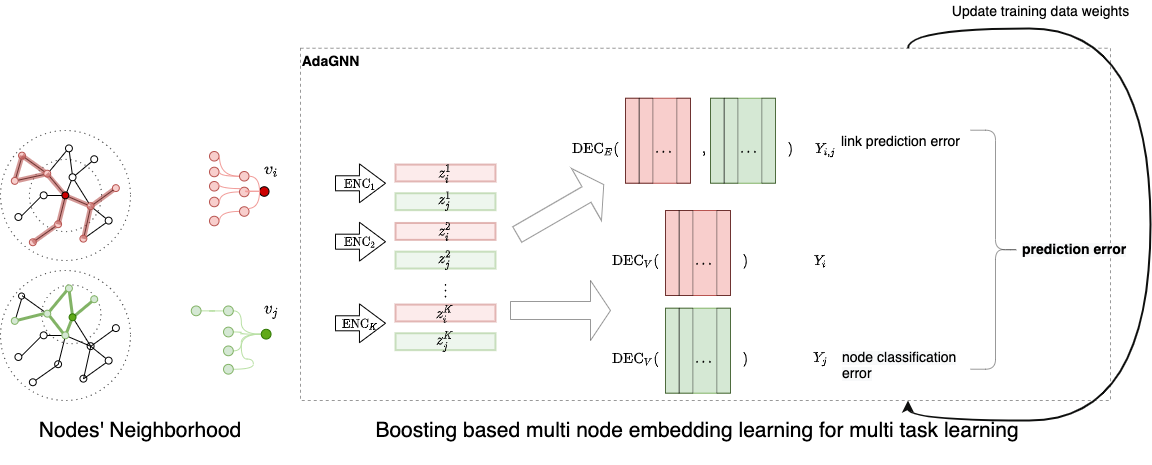}
    \caption{Multi-task learning. One uniform encoders and two different decoders are used.}
    \label{fig:architecture}
\end{figure*}

\subsection{Experimental Settings}
In addition to test link prediction and node classification separately, we also experiment on multi task learning  (as shown in Figure~\ref{fig:architecture}). Each training data point's weight will be updated based on the final combined predicting error, which makes the errors of two tasks comparable and gives high-degree nodes that are susceptible to high errors relatively higher weights in node recommendation tasks. The scalability and time complexity of \adagnn\, is highly coupled with the underlying model. It is obvious that \adagnn\, linearly growth with number of weak learners w.r.t time complexity. We also find that $5-10$ weak learners suffice for all datasets.

\paragraStartHighlight{Hyperparameters} For training of GNNs, including GraphSage, GAT, TGN, we use the Adam optimizer with a batch size of $200$. The learning rate is selected in $\{0.0001,\ 0.0005,\ 0.001\}$, the number of heads is selected in $\{1,\ 2,\ 3\}$, the number of neighbors is selected in $\{10,\ 20,\ 30\}$ and the number of layers is fixed as $1$. For boosting, the learning rate is selected in $\{1.0,\ 2.0,\ 3.0\}$. The number of negative samples is equal to positive samples except Linkedin datasets, where we have more positive samples. We do a simple grid hyper-parameter tuning for both \adagnn\, and baseline models, the best performance metrics are reported in the next section.


\subsection{Performance Comparison with Baselines} \label{performance}

\setlength\belowcaptionskip{0ex}

\begin{table}[h]\small
\begin{center}
\resizebox{3.5in}{!}{%
\begin{tabular}{ |p{1.5cm}|p{1.25cm}|p{1.25cm}|p{1.25cm}|p{1.25cm}|p{1.25cm}|p{1.25cm}| } 
\hline
&\multicolumn{2}{|c|}{Wikipedia}&\multicolumn{2}{|c|}{Movielens}&\multicolumn{1}{|c|}{Linkedin} \\
\hline
\multicolumn{6}{|c|}{Static graph method}\\
\hline
GraphSage    & $93.35\pm0.2$ & $92.35\pm0.3$ & $72.28\pm0.2$ & $71.16\pm0.2$ &$60.65\pm0.6$\\
GAT                                & $94.72\pm0.2$ & $93.71\pm0.2$ & $75.69\pm0.3$ & $74.57\pm0.3$ &$69.50\pm1.0$\\
\hline
AdaSage & $94.51\pm0.1$ & $93.24\pm0.1$ & $74.47\pm0.3$ & $72.67\pm0.3$ &$68.45\pm0.1$\\
AdaGAT-b  & $95.83\pm0.1$ & $94.57\pm0.1$ & $79.71\pm0.3$ & $78.09\pm0.3$ &$\mathbf{75.98\pm0.3}$\\
AdaGAT            & $95.64\pm0.1$ & $94.43\pm0.1$ & $78.09\pm0.3$ & $76.45\pm0.3$ &$\mathbf{76.17\pm0.3}^{\ddagger}$\\
AdaGAT-n          & $\mathbf{96.32\pm0.1}$ & $\mathbf{95.86\pm0.1}$ & $\mathbf{82.86\pm0.2}^{\dagger}$ & $\mathbf{82.83\pm0.2}^{\dagger}$  &$\mathbf{78.83\pm0.1}^{\dagger}$\\
\hline
\multicolumn{6}{|c|}{Dynamic graph method}\\
\hline
TGN & $\mathbf{98.46\pm0.1}^{\ddagger}$ & $\mathbf{97.81\pm0.1}^{\ddagger}$ & $\mathbf{81.34\pm0.4}$ & $\mathbf{80.35\pm0.4}$ &/ \\
\hline
AdaTGN& $\mathbf{99.00\pm0.1}^{\dagger}$ & $\mathbf{98.40\pm0.1}^{\dagger}$ & $\mathbf{82.66\pm0.4}^{\ddagger}$ & $\mathbf{81.45\pm0.4}^{\ddagger}$ & / \\
\hline
\end{tabular}}
\caption{Average Precision \% for Future Link Prediction Task. \textbf{First}$^{\dagger}$, \textbf{Second}$^{\ddagger}$, \textbf{Third} best performing method.}
\label{table:temporal_graph}
\end{center}

\begin{center}
\resizebox{3.5in}{!}{%
\begin{tabular}{ |c|p{1.25cm}|p{1.25cm}|p{1.25cm}|p{1.25cm}| } 
\hline
&\multicolumn{4}{|c|}{Twitch} \\
\hline
Task & \multicolumn{2}{|c|}{Link Prediction} & \multicolumn{2}{|c|}{Recommendation} \\
\hline
GraphSage         & $81.56\pm0.6$ & $73.55\pm0.5$ & $51.81\pm1.7$ & $47.45\pm0.6$\\
GAT-200             & $82.74\pm0.5$ & $74.35\pm1.0$ & $69.57\pm2.5$ & $62.24\pm1.5$\\
GAT-1024           & $83.24\pm0.8$ & $73.67\pm1.0$ & $70.45\pm1.5$ & $62.40\pm2.0$\\
GAT-3170           & $83.14\pm0.5$ & $73.57\pm0.3$ & $71.54\pm3.5$ & $63.41\pm1.5$\\
\hline
AdaSage               & $86.20\pm0.2$ & $77.56\pm0.5$ & $60.32\pm1.2$ & $55.54\pm0.6$\\
AdaGAT-200         & $85.73\pm0.1$ & $76.28\pm0.5$ & $77.82\pm0.6$ & $69.68\pm0.3$\\
AdaGAT-1024       & $86.65\pm0.1$ & $77.12\pm0.1$ & $81.67\pm0.3$ & $72.50\pm0.2$\\
AdaGAT-3170       & $85.92\pm0.3$ & $75.84\pm0.5$ & $80.18\pm2.0$ & $71.25\pm1.0$\\
AdaGAT-b             & $85.39\pm0.2$ & $76.35\pm0.5$ & $76.15\pm1.4$ & $68.70\pm0.8$\\
AdaGAT-n-200      & $\mathbf{86.96\pm0.2}^{\ddagger}$ & $\mathbf{77.37\pm0.1}^{\ddagger}$ & $\mathbf{95.91\pm0.1}^{\ddagger}$ & $\mathbf{78.06\pm0.3}$\\
AdaGAT-n-1024    & $\mathbf{87.34\pm0.1}^{\dagger}$   & $\mathbf{77.86\pm0.2}^{\dagger}$   & $\mathbf{96.44\pm0.1}^{\dagger}$   & $\mathbf{80.56\pm0.5}^{\dagger}$\\
AdaGAT-n-3170    & $\mathbf{86.69\pm0.5}$                     & $\mathbf{76.57\pm1.0}$                      & $\mathbf{95.23\pm0.3}$                      & $\mathbf{78.64\pm0.5}^{\ddagger}$\\
\hline
\end{tabular}}
\caption{Average Precision \% for Multi-Task Learning. \textbf{First}$^{\dagger}$, \textbf{Second}$^{\ddagger}$, \textbf{Third} best performing method.}
\label{table:multi_task}
\end{center}
\end{table}

\vfill

Table \ref{table:temporal_graph} presents the results for future link prediction on dynamic graph. \adagnn\ outperforms the baselines in both transductive (2nd, 4th, 6th column) and inductive settings (3rd, 5th column) . One interesting model is AdaSage. For this model, we only aggregate the information from $1$ neighbor. The time complexity of this model is very low compared to the expensive dynamic graph method, such as TGN which needs to use RNN to update the memory at each training step. This simple model with boosting still achieves the accuracy around $94.5\%$. It gives us the idea that one can use boosting-based shallow models to achieve comparable accuracy using much shorter time and much smaller GPU memory.

Noticeably, \adagnn\ performs better on Movielens dataset than Wikipedia dataset. The reason is that \adagnn\ can utilize users' non-repetitive interaction behavior. In Wikipedia dataset, 69\% users keep editing the same page for the whole time domain, and 84\% users consecutively edit the same page. By 72\% chance, there is only one unique neighbor in node neighborhoods for a fixed sample size of $10$. In this case, the information of node neighborhoods is not very rich and projecting this one single neighbor onto multiple embedding spaces will not bring a dramatic increase. Movielens dataset, on the other hand, has 0\% users rating the same movie for the whole time domain and there are always more than one neighbors in node neighborhoods. In this case, the information of neighborhoods is rich and high-dimensional, so projecting neighborhoods onto multiple embedding spaces gives a large improvement. The difference between Wikipedia and Movielens datasets tells us that the idea of multiple embedding spaces in general performs better when the information of node neighborhoods is rich.

Table \ref{table:multi_task} presents the results on multi-task learning, including link prediction and node recommendation, in transductive (2nd, 4th column) and inductive settings (3rd, 5th column) . GAT-200, GAT-1024, GAT-3170 represent GAT with embedding dimension 200, 1024, 3170 respectively. \adagnn\ clearly outperforms the baselines by a large margin in both transductive and inductive settings for all dimensions, especially for the recommendation task.

\subsection{Model Analysis} \label{analysis}
In this section, we further verify the multiple embedding spaces experimentally, including
\begin{itemize}
\item Do multiple low-dimensional embedding spaces perform better than one single high-dimensional embedding space, even when the dimension of single space is equal to the sum of weak learners' dimensions?
\item Does increasing the number of embedding spaces bound the generalization error as we predict theoretically?
\item Do multiple embedding spaces capture different information in real experiment?
\item Is \adagnn\ robust towards limited training data?
\end{itemize}

\setlength\abovecaptionskip{1ex}
\setlength\belowcaptionskip{0ex}

\begin{figure}[H]
	\centering
	\includegraphics[scale=0.35]{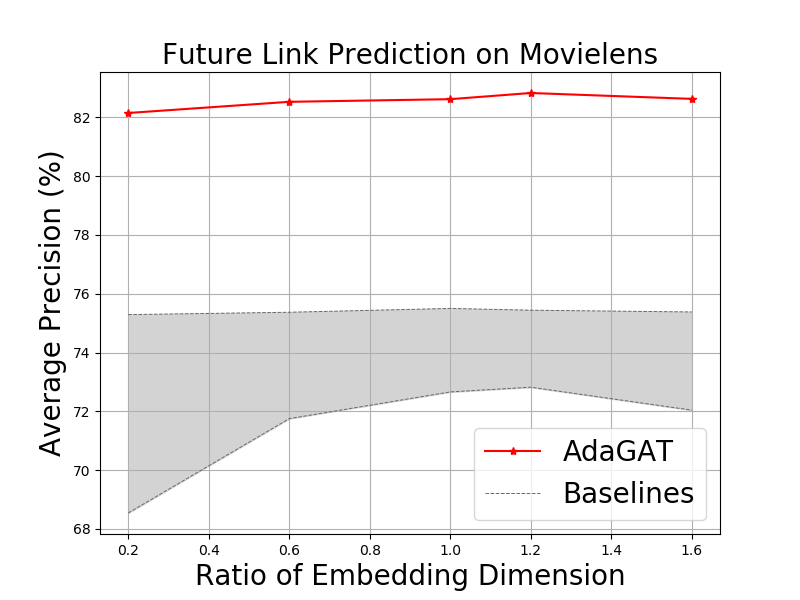}
	\caption{Varying the dimension of embedding space, default ($1.0$) is the dimension of node features.}
	\label{fig:embedding_dimension}
\end{figure}

\subsubsection{High-Dimensional Embedding Space}

For high-dimensional information of node neighborhoods, we mentioned before that it can be projected onto either one single high-dimensional embedding space or multiple low-dimensional embedding spaces. In Figure \ref{fig:embedding_dimension}, we discuss the performance of single high-dimensional embedding space and multiple low-dimensional embedding spaces by varying the embedding dimension from around 200 to 1800, and computing the average precision for static baselines and AdaGAT for different embedding dimensions on the future link prediction task of Movielens dataset. The effect on other tasks and datasets is similar. We observe that multiple embedding spaces outperform one single embedding space for all different embedding dimensions. Furthermore, AdaGAT even outperforms baselines when the embedding dimension of baselines and the sum of all subspace dimensions are almost equal. In these cases, it is difficult to find one single embedding space such that all pairs of linked nodes are close to each other and it's better to use multiple embedding spaces.

\setlength\abovecaptionskip{-2ex}
\begin{figure}
     \centering
     \begin{subfigure}[b]{0.15\textwidth}
         \centering
         \includegraphics[scale=0.15]{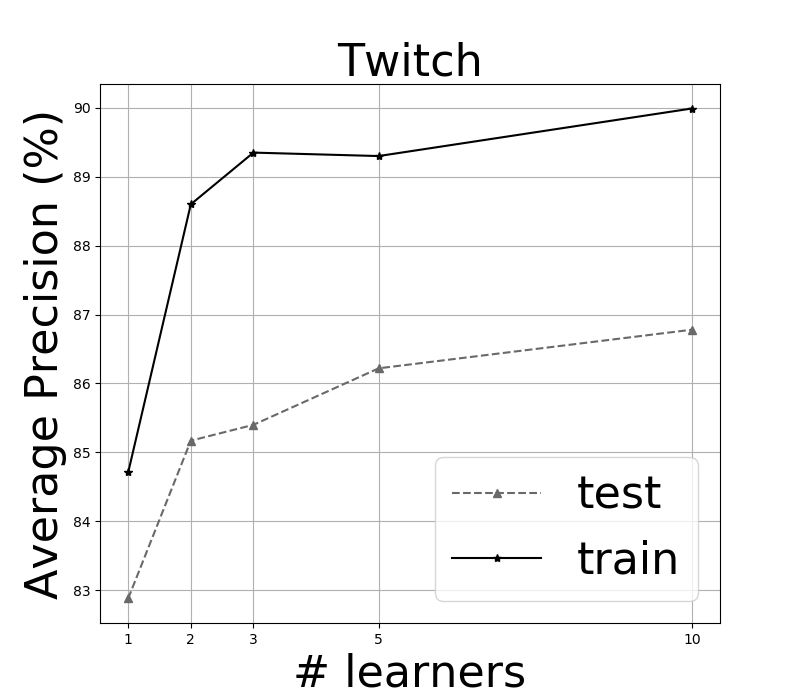}
         \label{fig:twitch_ap}
     \end{subfigure}
     \hfill
     \begin{subfigure}[b]{0.15\textwidth}
         \centering
         \includegraphics[scale=0.15]{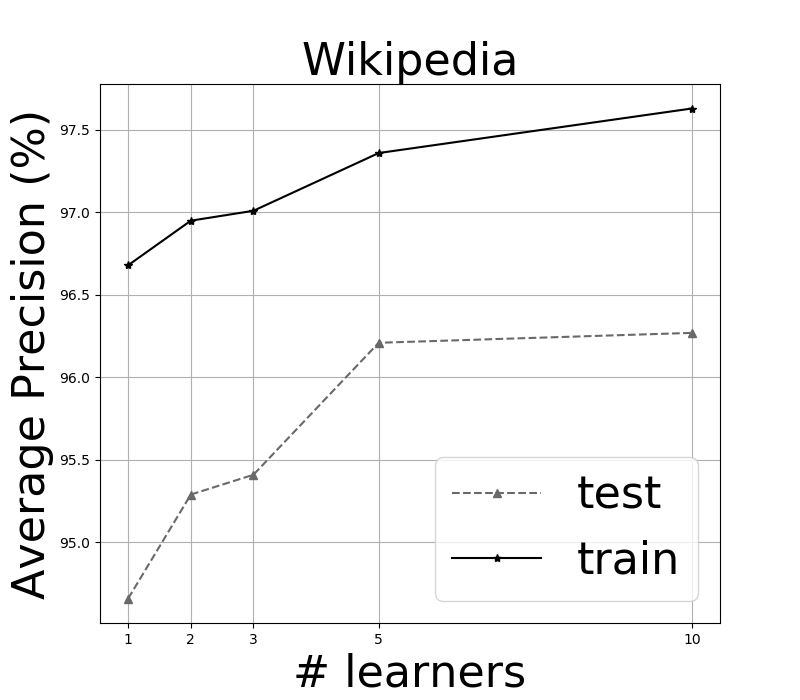}
         \label{fig:wikipedia_ap}
     \end{subfigure}
     \hfill
     \begin{subfigure}[b]{0.15\textwidth}
         \centering
         \includegraphics[scale=0.15]{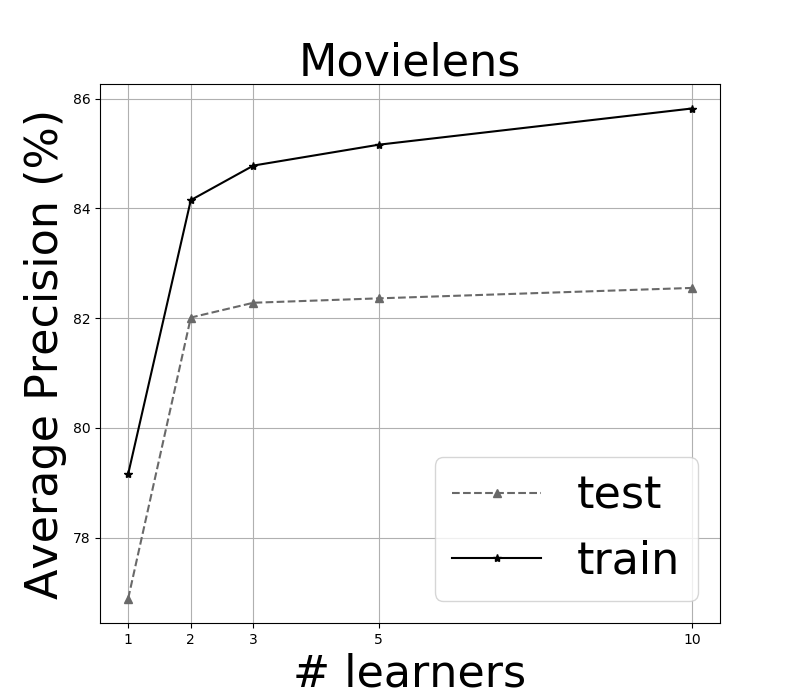}
         \label{fig:movilens_ap}
     \end{subfigure}

     \begin{subfigure}[b]{0.15\textwidth}
         \centering
         \includegraphics[scale=0.15]{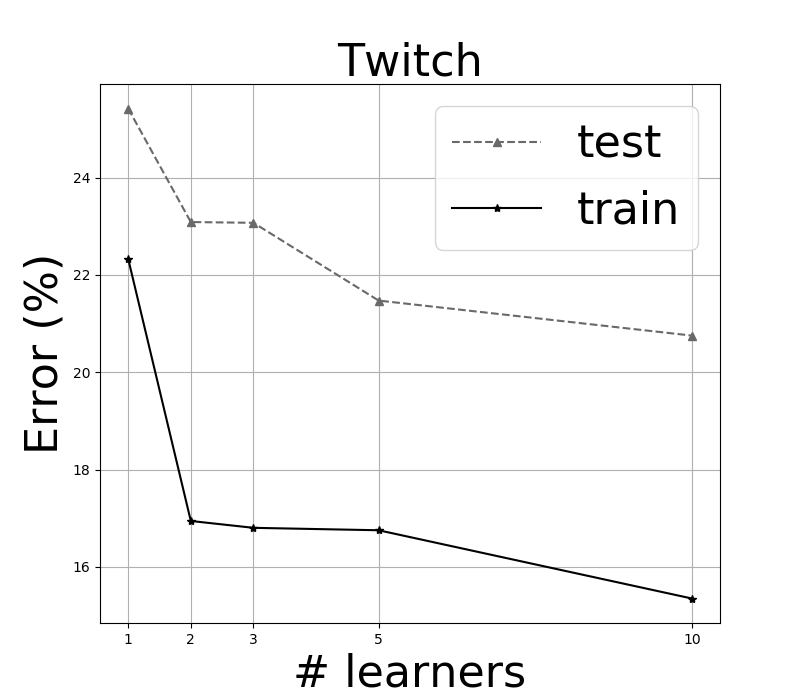}
         \label{fig:twitch_err}
     \end{subfigure}
     \hfill
     \begin{subfigure}[b]{0.15\textwidth}
         \centering
         \includegraphics[scale=0.15]{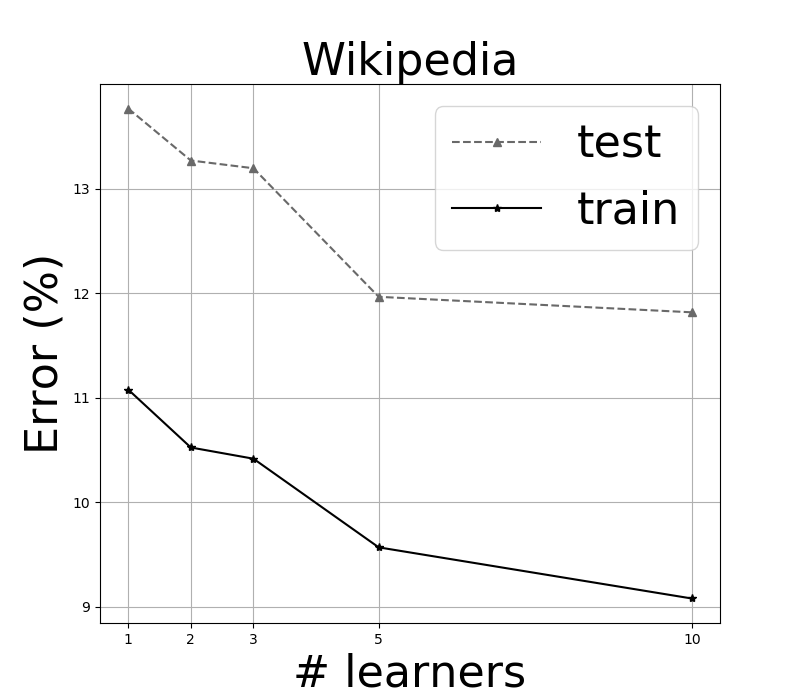}
         \label{fig:wikipedia_err}
     \end{subfigure}
     \hfill
     \begin{subfigure}[b]{0.15\textwidth}
         \centering
         \includegraphics[scale=0.15]{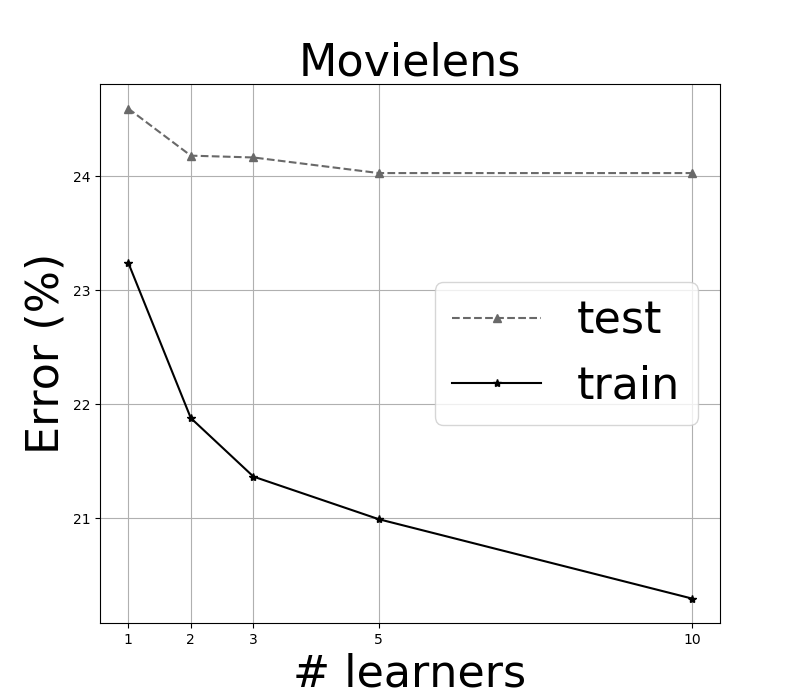}
         \label{fig:movilens_err}
     \end{subfigure}

    \begin{subfigure}[b]{0.15\textwidth}
         \centering
         \includegraphics[scale=0.15]{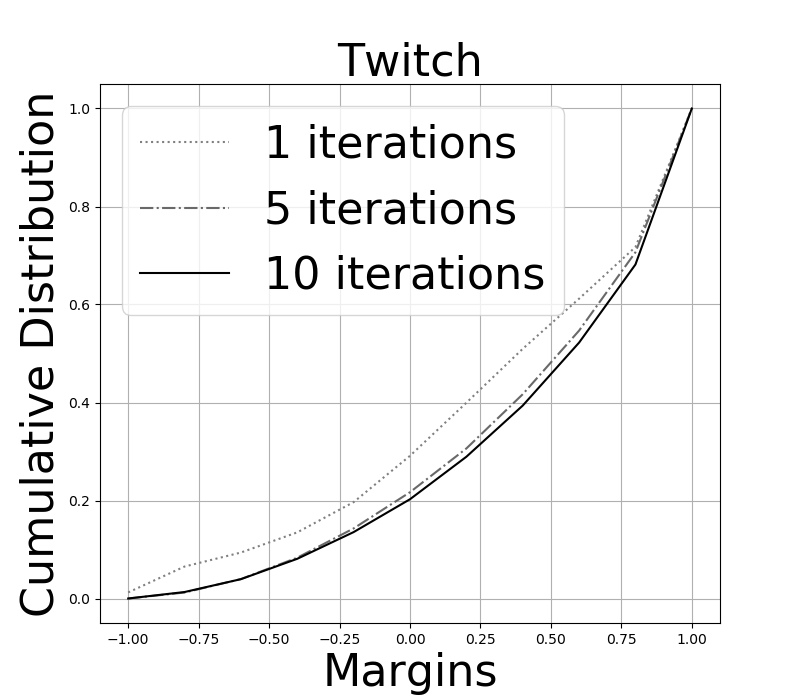}
         \label{fig:twitch_margin}
     \end{subfigure}
     \hfill
     \begin{subfigure}[b]{0.15\textwidth}
         \centering
         \includegraphics[scale=0.15]{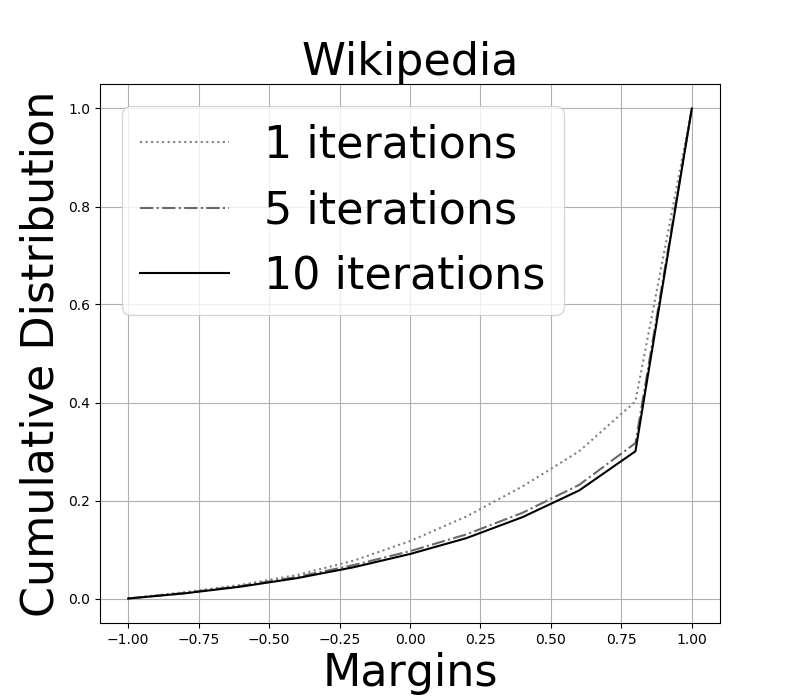}
         \label{fig:wikipedia_margin}
     \end{subfigure}
     \hfill
     \begin{subfigure}[b]{0.15\textwidth}
         \centering
         \includegraphics[scale=0.15]{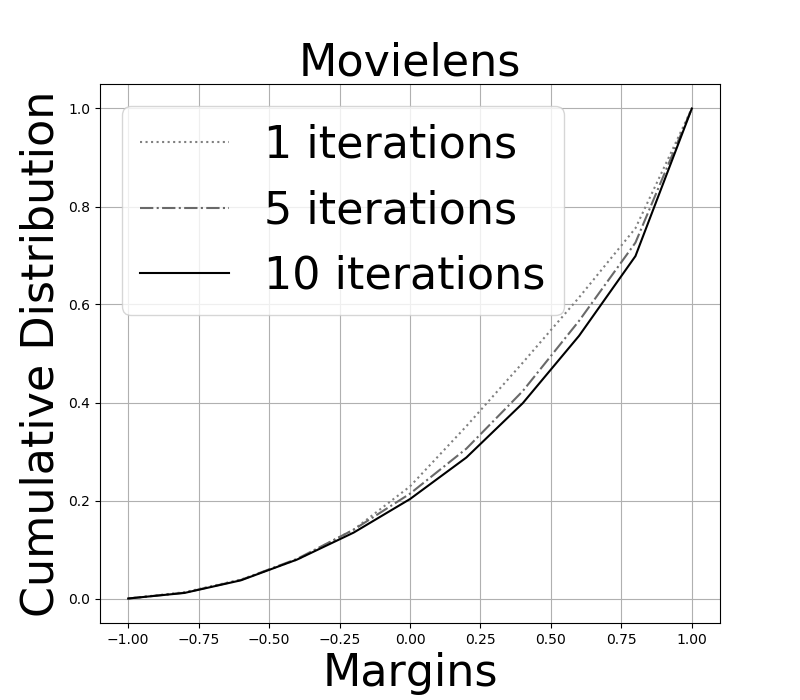}
         \label{fig:movilens_margin}
     \end{subfigure}
    
     \hfill
     \caption{Average precision, error curves and margin distribution graphs for boosting-based GAT on three datasets. The first, second and last lines are average precision, error curves and margin distribution graphs respectively.}
     \label{fig:generalization}
\end{figure}

\begin{figure*}
     \centering
     \begin{subfigure}[b]{0.24\textwidth}
         \centering
         \includegraphics[scale=0.3]{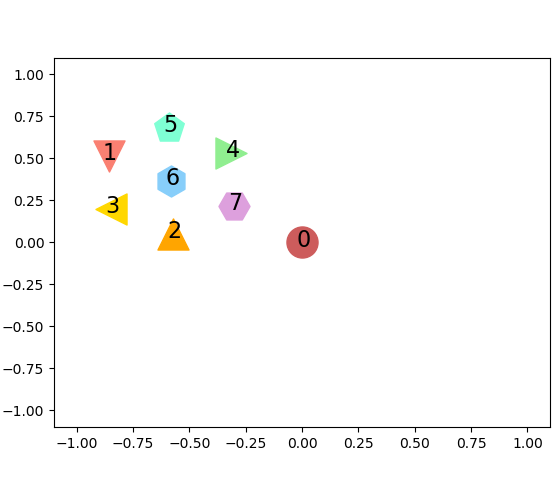}
         \label{fig:movielens_embedding1}
     \end{subfigure}
     \hfill
     \begin{subfigure}[b]{0.24\textwidth}
         \centering
         \includegraphics[scale=0.3]{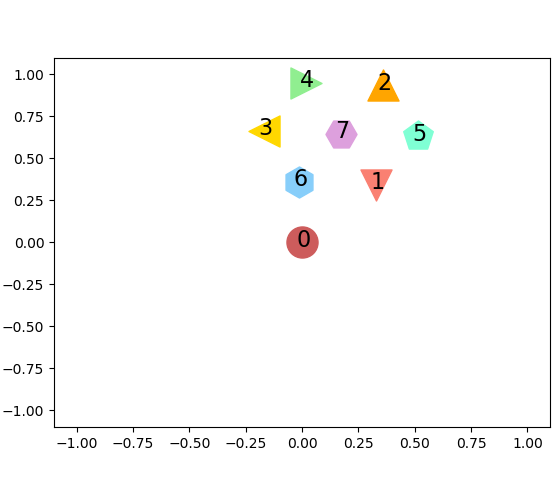}
         \label{fig:movielens_embedding2}
     \end{subfigure}
     \hfill
     \begin{subfigure}[b]{0.24\textwidth}
         \centering
         \includegraphics[scale=0.3]{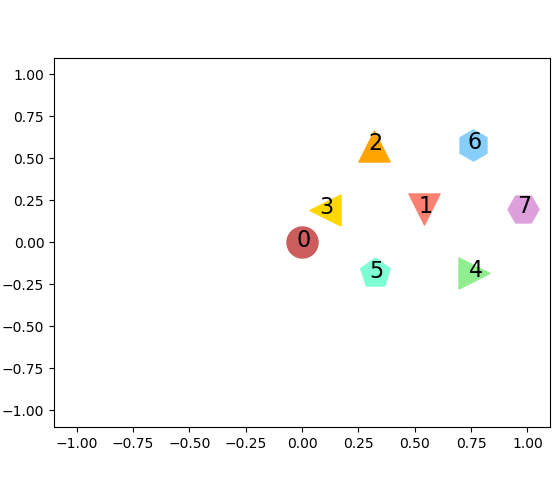}
         \label{fig:movielens_embedding3}
     \end{subfigure}
     \hfill
     \begin{subfigure}[b]{0.24\textwidth}
         \centering
         \includegraphics[scale=0.3]{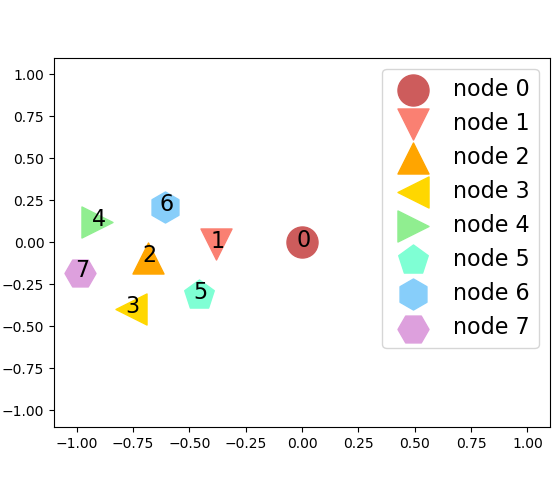}
         \label{fig:movielens_embedding4}
     \end{subfigure}
     \hfill
     \caption{Visualization of multiple embedding spaces for multiple learners. Node 1-7 are the neighbors of node 0. We move the node 0 to the origin and rescale the maximum distance between node 0 and node 1-7 to be one.}
     \label{fig:movelens_embedding_space}
\end{figure*}

\subsubsection{Generalization error}

We use the margin theory to analyze the generalization errors of \adagnn. Margins can be considered as a measure of confidence. Here we define the margin as $(y_{i,j}-\tau)(f(\mathcal{G}_i,\mathcal{G}_j)-\tau)$, where $y_{i,j}\in[0, 1]$ represents the existence of edges or not, $f(\mathcal{G}_i,\mathcal{G}_j)\in[0, 1]$ represents the predicted similarity between two nodes and $\tau\in(0, 1)$ is the threshold. The margin is negative when the prediction is incorrect and positive otherwise. Similar to \cite{Scarselli18}, the margin reaches minimum when the model predicts incorrectly with high confidence, maximum when the model predicts correctly with high confidence, and close to zero when the prediction has low confidence.  For example, given $y_{i,j}=1, \tau = 0.5$, the margin reaches minimum if $f(\mathcal{G}_i,\mathcal{G}_j)=0$, maximum if $f(\mathcal{G}_i,\mathcal{G}_j)=1$, and close to zero if $f(\mathcal{G}_i,\mathcal{G}_j)=0.501$. Therefore, margin is a measure of correctness and confidence. When the prediction is based on a clear and substantial majority of the base classifiers, the margin will be positively large, corresponding to greater confidence in the predicted labels.


In Section \ref{margin_theory}, we showed the existence of the generalization error theorem when the training and test errors are measured using the margin metric. Therefore, under margin metric, we can bound the test error by looking at the training error and the generalization error bound.

We visualize the effect of boosting on the margins by plotting their distribution, as in Figure \ref{fig:generalization}. The first two rows are the average precision scores and errors for different number of weak learners on Twitch, Wikipedia, Movielens datasets respectively. The last row contains the margin distribution graph. We can observe that boosting aggressively pushed up the training data with small or negative margin, so the number of training data with small or negative margin decreased, which leads to smaller training and testing errors. The generalization error bound in Eq. \ref{eq:bound} in terms of number of samples $m$ and VC-dimension $d$ has no explicit dependence on the number of weak learners. In the second row of Figure \ref{fig:generalization} the observed generalization errors in reality also stays roughly constant with respect to the number of weak learners.

\subsubsection{Embedding Visualization} 
For different weak learners, we want that it projects the node neighborhood  $\mathcal{G}_i$ onto different embedding spaces, capturing different similarities. In order to visualize it, we plot node embeddings in different spaces, as in Figure \ref{fig:movelens_embedding_space}. We use t-SNE \cite{laurens08} to visualize the high-dimensional embeddings in 2d space. In Figure \ref{fig:movelens_embedding_space}, node 1-7 are the neighbors of node 0 and we plot their embeddings in four different embedding spaces. In embedding space 1, node 7 is closest to node 0 and node 1 is farthest from node 0, while node 1 is closest to node 0 and node 7 is farthest from node 0 in embedding space 4. Similarly, node 6 is closest to node 0 in embedding space 2 while it is far away from node 0 in embedding space 3. From these observations, we can see that for the same pair of nodes, their similarity is different from space to space. It experimentally verifies that each embedding space captures different information and preserves different similarities about node neighborhoods in this case.

Combine the results in Figure \ref{fig:embedding_dimension} and \ref{fig:movelens_embedding_space}, we can experimentally validate the necessity of multiple embedding spaces. In some case, it's hard to find one single embedding space such that two linked nodes are always close to each other. Therefore, we can use multiple embedding spaces and only require parts of linked nodes are close to each other in each embedding space, which is easier to achieve. Then we can combine the similarities in all embedding spaces to predict the existence of edges.

\subsubsection{Robustness to Limited Training Data}

One challenge of learning on social networks is the lack of high-quality data. Therefore, an ideal model should be efficient in leveraging limited training data. In this experiment, we validate the robustness of AdaGAT to limited training data in Figure \ref{fig:training_data}. We vary the ratio of training data from $0.1$ to $0.9$ on Movielens dataset. The numbers of validation and testing data are always equal. We can see that the AdaGAT always outperforms the baselines. 

\setlength\abovecaptionskip{1ex}
\setlength\belowcaptionskip{0ex}
\begin{figure}[H]
    \centering
    \includegraphics[scale=0.35]{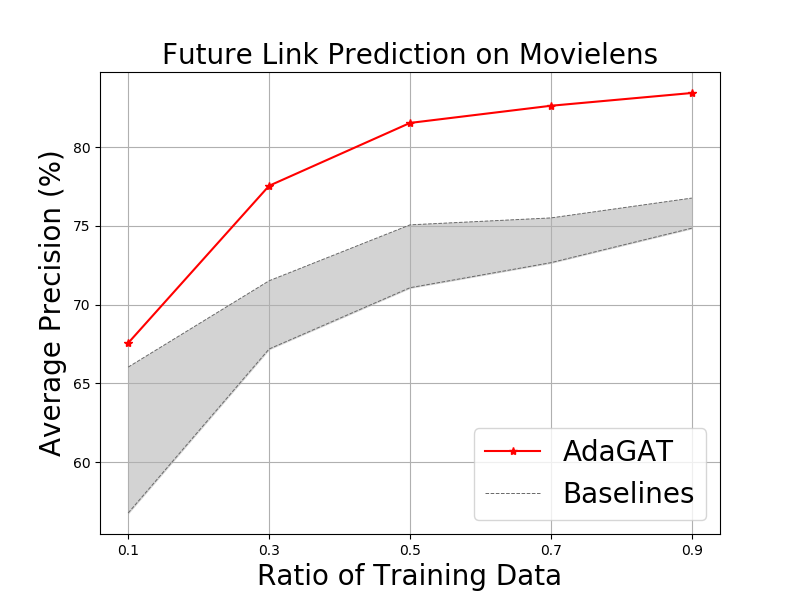}
    \caption{Varying the number of training data.}
    \label{fig:training_data}
\end{figure}

\section{Conclusion}

In this work, we introduce a novel approach to automatically project the node neighborhoods onto multiple low-dimensional embedding spaces using boosting method and develop \adagnn\ based on this approach. We theoretically and experimentally analyze the effectiveness and robustness of \adagnn, especially the advantages of multiple embedding spaces over single embedding space. We demonstrate that \adagnn\ can achieve great performance when the information of node neighborhoods is rich and the dimension of the ideal embedding space is large. Our work envisions the novel application of multiple embedding spaces and boosting method in graph neural network, opens up a direction along the idea that preserves the similarities between nodes in different spaces in the field of social networks and also leaves us several future questions to think about, including how to further reduce the information leakage between embedding spaces, how to further narrow down the focus of each embedding space and how to further measure the richness of node neighborhoods.  

\newpage
\bibliographystyle{ACM-Reference-Format}
\bibliography{reference}

\end{document}